\newtheorem{assumption}{Assumption}
\newtheorem{lemma}{Lemma}
\newtheorem{theorem}{Theorem}
\title{Efficient Subsampled Gauss-Newton and Natural Gradient Methods for Training Neural Networks}
\author{%
  Yi Ren \\
  IEOR Department \\
  Columbia University\\
  \texttt{yr2322@columbia.edu} \\
  \And
  Donald Goldfarb \\
  IEOR Department \\
  Columbia University\\
  \texttt{goldfarb@columbia.edu} \\
}
\begin{document}

\maketitle

\begin{abstract}
  We present practical Levenberg-Marquardt variants of Gauss-Newton and natural gradient methods for solving non-convex optimization problems that arise in training deep neural networks involving enormous numbers of variables and huge data sets. Our methods use subsampled Gauss-Newton or Fisher information matrices and either subsampled gradient estimates (fully stochastic) or full gradients (semi-stochastic), which, in the latter case, we prove convergent to a stationary point. By using the Sherman-Morrison-Woodbury formula with automatic differentiation (backpropagation) we show how our methods can be implemented to perform efficiently. Finally, numerical results are presented to demonstrate the effectiveness of our proposed methods.
\end{abstract}

\section{Introduction}

First-order stochastic methods are predominantly used to train deep neural networks (NN), including mini-batch gradient descent (SGD) and its variants that use momentum and acceleration \cite{bottou2018optimization, qian1999momentum} and an adaptive learning rate \cite{duchi2011adaptive, kingma2014adam}. 
First-order methods are easy to implement, and only require moderate computational cost per iteration. However, it is cumbersome to tune their hyper-parameters (e.g., learning rate), and they are often slow to escape from regions where the objective function's Hessian matrix is ill-conditioned. Although adaptive learning rate methods converge fast on the training set, pure SGD is still of interest as it sometimes has better generalization property (\cite{keskar2017improving}).

Second-order stochastic methods have also been proposed for training deep NNs because they take far fewer iterations to converge to a solution by using knowledge of the curvature of the objective function. They also have the ability to both escape from regions where the Hessian of objective function is ill-conditioned, and provide adaptive learning rates. Their main drawback is that, due to the huge number of parameters that deep NNs have, it is practically impossible to compute and invert a full Hessian matrix. Efforts to overcome this problem include Hessian-free inexact Newton methods, stochastic L-BFGS methods, Gauss-Newton and natural gradient methods and diagonal scaling methods. See \cite{bottou2018optimization} for a review of these approaches and relevant references.

\subsection{Our Contributions}

Our main contribution is the development of methods for training deep NNs that incorporate partial, but substantial, second-order information, while keeping the computational cost of each iteration comparable to that required by first-order methods. To achieve this, we propose new generic subsampled generalized Gauss-Newton and natural gradient methods that can be implemented efficiently and are provably convergent. Our methods add a Levenberg-Marquardt (LM) damping term to  the Gauss-Newton and Fisher information matrices and invert the resulting matrices using the Sherman-Morrison-Woodbury formula. Moreover, by taking advantage of the Kronecker factored structure in these matrices, we are able to form and invert them in $O(n)$ time. Furthermore, we prove that semi-stochastic versions of our algorithms (i.e., those that use a full gradient combined with mini-batch stochastic Gauss-Newton or Fisher information matrices) converge to a stationary point. 
We demonstrate the effectiveness of our methods with numerical experiment, comparing both first-order method (SGD) and second-order methods (Hessian-free, KFAC). 





\subsection{Closely Related Work}

Our methods were initially motivated by the Hessian-free approach of \cite{martens2010deep}, which approximates the Hessian by the generalized Gauss-Newton matrix and then approximately solves the huge $n \times n$ linear system involving that matrix and an LM damping term to update the $n$ parameters of the NN by an ”early-termination” linear CG method. Other closely related methods include the Krylov subspace descent method of \cite{vinyals2012krylov}, which generalizes the Hessian-free approach by constructing a Krylov subspace; the KFAC method \cite{martens2015optimizing}, which uses the block-diagonal part of the Fisher matrix to approximate the Hessian; and the Kronecker Factored Recursive Approximation method \cite{botev2017practical}, which uses a block-diagonal approximation of the Gauss-Newton matrix. 
For very recent work  on properties of the natural gradient method and the Fisher matrix in the context of NNs see \cite{bernacchia2018exact, cai2019gram, zhang2019fast}.



\section{Background}

\subsection{Feed-forward Neural Networks}

Although our methods are applicable to a wide range of NN architectures, for simplicity, we focus on feed-forward fully-connected NNs with $L+1$ layers. At the $l$-th layer, given the vector of outputs from the preceding layer $v^{(l-1)}$ as input, $v^{(l)}$ is computed as $v^{(l)} = \phi^{(l)}(W^{(l)} v^{(l-1)} + b^{(l)})$, where $W^{(l)} \in \mathbb{R}^{m_l \times m_{l-1}}$, $b^{(l)} \in \mathbb{R}^{m_l}$, and $\phi^{(l)}: \mathbb{R}^{m_l} \to \mathbb{R}^{m_l}$ is a nonlinear activation function. Hence, given the input $x = v^{(0)}$, the NN outputs $\hat{y} = v^{(L)}$. To train the NN we minimize an empirical average loss
\begin{align}
	f(\theta)
    = \frac{1}{N} \sum_{i=1}^N f_i(\theta)
    = \frac{1}{N} \sum_{i=1}^N \varepsilon(\hat{y}_i(\theta), y_i),
    \label{equation_15}
\end{align}
where
$\theta = \left(
	\text{vec}(W^{(1)})^T,
	(b^{(1)})^T,
	\cdots,
    \text{vec}(W^{(L)})^T,
	(b^{(L)})^T
\right)^T$
($\text{vec}({W})$ vectorizes the matrix $W$ by concatenating its columns) and $\varepsilon(\hat{y}_i(\theta), y_i)$ is a loss function based on the differences between $\hat{y}_i$ and $y_i$, for the given set $\{ (x_1, y_1), ..., (x_N, y_N) \}$ of $N$ data points. Note $\theta \in R^n$, where $n = \sum_{l=1}^L \left( m_l m_{l-1} + m_l \right)$ can be extraordinarily large.



\subsection{Approximations to the Hessian matrix}

At iteration $t$, at the point $\theta = \theta^{(t)}$, Newton-like methods compute 
$
    p_t = - G_t^{-1} g_t,
$
where $G_t$ is an approximation to the Hessian of $f(\theta^{(t)})$, and $g_t$ is $\nabla f(\theta^{(t)})$, or an approximation to it, and then set $\theta^{(t+1)} = \theta^{(t)} + p_t$. Computing $G_t$ and inverting it (solving $G_t p_t = - g_t$) is the core step of such methods. Finding a balance between the cost of computing $p_t$ and determining an accurate direction $p_t$ is crucial to developing a good algorithm. 

\subsubsection{Gauss-Newton Method}

In order to get a good approximation to the Hessian of $f(\theta)$, we first examine the Hessian of $f_i(\theta)$ corresponding to a single data point. By (\ref{equation_15}) it follows from the chain rule that
\begin{align*}
    \frac{\partial^2 f_i(\theta)}{\partial \theta^2}
    & = J_i^\top H_i J_i + \sum_{j=1}^{m_L} \left( \frac{\partial f_i(\theta)}{\partial \hat{y}_i} \right)_{j} \frac{\partial}{\partial \theta} (J_i)_{j},
\end{align*}
where 
$
	J_i = \frac{\partial \hat{y}_i}{\partial \theta},
    \text{ and } H_i = \frac{\partial^2 f_i(\theta)}{\partial (\hat{y}_i)^2}
$. The Gauss-Newton (GN) method (e.g., see \cite{wright1999numerical, martens2010deep}) approximates the Hessian matrix by ignoring the second term in the above expression, i.e., the GN approximation to $\frac{\partial^2 f_i(\theta)}{\partial \theta^2}$ is $J_i^T H_i J_i$. Note that $J_i \in R^{m_L \times n}$ and $H_i \in R^{m_L \times m_L}$, and hence that $H_i$ is a relatively small matrix. Finally, $\frac{\partial^2 f(\theta)}{\partial \theta^2}$ is approximated by
\begin{align}
	B_t = \frac{1}{N} \sum_{i=1}^N J_i^T H_i J_i.
    \label{equation_11}
\end{align}

\subsubsection{Natural gradient method}

The natural gradient (NG) method (\cite{amari1998natural}) modifies the gradient $\nabla f(\theta)$ by multiplying it by the inverse of the Fisher (information) matrix, which serves as an approximation to $\frac{\partial^2 f(\theta)}{\partial \theta^2}$: 
\begin{align}
    B_t = F_t \equiv \frac{1}{N} \sum_{i=1}^n \nabla f_i(\theta^{(t)}) \nabla f_i(\theta^{(t)})^\top.
    \label{equation_31}
\end{align}

\subsubsection{Properties of the approximations}

There are several reasons why the NG and GN methods are well-suited for training NNs. First, even though the loss function (\ref{equation_15}) is a non-convex function of $\theta$, $H_i$ is positive semi-definite ($H_i \succeq 0$) for commonly-used loss functions (e.g., least-squared loss, cross entropy loss). Hence, $J_i^T H_i J_i \succeq 0$. Also, $F_t \succeq 0$. $p_t$ is a descent direction, as long as $-g_t$ is a descent direction and $g_t$ is not in the null space of $B_t$.
Second, the multiplication of an arbitrary vector by the matrix $B_t$ or $F$ can be done efficiently by backpropagation (see appendix or, e.g., \cite{schraudolph2002fast}).


\subsection{Mini-batch and damping}
\label{section_damping}

The prohibitively large amount of data and (relative) difficulty in computing the GN and Fisher matrices suggests simplifying these approximations to the Hessian matrix further. Consequently, as in \cite{martens2010deep, martens2015optimizing}, we estimates (\ref{equation_11}) and (\ref{equation_31}) using a mini-batch of indices $S_2^t \subset \{ 1, 2, ..., N \}$ at iteration $t$ where $|S_2^t| = N_2$. 




Mini-batch approximations make the GN and Fisher matrices low-rank. Hence, we add $\lambda I$ to them to make them invertible (namely, the Levenberg-Marquardt (LM) method (\cite{more1977levenberg})). 
Thus, the approximation to the Hessian becomes
\begin{align}
	G_t
	= B_t + \lambda I,
	\label{equation_22}
\end{align}
where $B_t$ is either the Fisher information matrix or the Gauss-Newton matrix.

Viewing the LM method as a trust-region method, the magnitude of $\lambda$ is inversely related to the size of the region $||p|| \le \Delta_t$ in which we are confident about the ability of the quadratic model
\begin{align*}
    m_t(p) = f(\theta^{(t)}) + g_t^T p + \frac{1}{2} p^T B_t p
\end{align*}
to approximate $f(\theta^{(t)} + p)$.
Note that solving $B_t p = - g_t$ is equivalent as minimizing $m_t(p)$.

To determine the value of $\lambda$, let $\lambda = \lambda_{\text{LM}} + \tau$, where $\lambda_{\text{LM}}$ is updated at each iteration, and $\tau > 0$. $\tau$ is typically very small and can be viewed as coming from an $l_2$ regularization term in the objective function, which is a common practice in training deep NNs to avoid possible over-fitting. It also ensures that $\lambda_t \geq \tau > 0$, which guarantees that the smallest eigenvalue of $G_t$ is strictly positive.

To update $\lambda_{\text{LM}}$, we consider the ratio of the actual reduction in $f(\cdot)$ to the reduction in the quadratic model $m_t(\cdot)$
\begin{align}
    \rho_t = \frac{f(\theta^{(t)}) - f(\theta^{(t)} + p_t)}{ m_t(0) - m_t(p_t)}
    \label{equation_24}
\end{align}
to measure how "good" that model is. If $\rho_t$ is positive and large, it means that the quadratic model is a good approximation. Hence, we enlarge the "trust region", by decreasing the value of $\lambda_{\text{LM}}$. If $\rho_t$ is small, we increase the value of $\lambda_{\text{LM}}$ (see Section \ref{section_10} for more intuition). Specifically, $\lambda_{\text{LM}}$ is updated as follows: 
{\bf if} $\rho_t < \epsilon$: $\lambda_{\text{LM}}^{(t+1)} = boost \times \lambda_{\text{LM}}^{(t)}$;
{\bf else if} $\rho_t > 1 - \epsilon$: $\lambda_{\text{LM}}^{(t+1)} = drop \times \lambda_{\text{LM}}^{(t)}$;
{\bf else}: $\lambda_{\text{LM}}^{(t+1)} = \lambda_{\text{LM}}^{(t)}$, where $0 < \epsilon < \frac{1}{2}$, $drop < 1 < boost$. 
Finally, $\lambda_{t+1} = \lambda_{\text{LM}}^{(t+1)} + \tau$.










\section{Our Innovation: a general framework for computing $p_t$}
\label{section_6}

In the NN context, it is very expensive to compute (\ref{equation_11}) or (\ref{equation_31}); and even given $G_t$, computing $p_t$ still requires $O(n^3)$ time, which is prohibitive. For these reasons \cite{martens2010deep} proposed a Hessian-free method that uses an "early termination" linear conjugate gradient method to compute $p_t$ approximately. Here we propose an alternative approach, that is both potentially faster, and is also exact.


\subsection{Using the Sherman-Morrison-Woodbury (SMW) Formula}

The matrix $G_t$ for both the GN and NG methods has the form $G_t = \lambda I + \frac{1}{N_2} J^\top H J$, where $J^\top = (J_1^\top, \cdots, J_{N_2}^\top)$ and $H = \text{diag} \{ H_1, \cdots, H_{N_2} \}$ for GN and $J^\top = (\nabla f_1(\theta), \cdots, \nabla f_{N_2}(\theta))$ and $H = I$ for NG. Using the well-known SMW formula,
\begin{align}
    G_t^{-1} = \frac{1}{\lambda} \left( I - \frac{1}{N_2} J^\top D_t^{-1} J \right),
    \text{ where }
    D_t = \lambda H^{-1} + \frac{1}{N_2} J J^\top.
    \label{equation_34}
\end{align}

Note that the matrix $D_t$ in (\ref{equation_34}) is ${N_2 m_L \times N_2 m_L}$ in the GN case and ${m_L \times m_L}$ in the NG case, much smaller than the $n \times n $ LM matrix $G_t$, assuming $N_2$ is not too large in the GN case. 

In cases where the $H_i$ are not invertible (e.g., softmax regression with GN method), we can still use SMW to obtain
\begin{align}
    G_t^{-1} = \frac{1}{\lambda} \left( I - \frac{1}{N_2} J^\top H D_t^{-1} J \right)
    \text{, where }
    D_t = \lambda I + \frac{1}{N_2} J J^\top H.
    \label{equation_35}
\end{align}
Because the analysis for these cases are similar to those where $H_i$ is invertible, we will restrict our analysis to the symmetric expressions in (\ref{equation_34}).

\subsection{Backpropagation in SMW}

For an arbitrary vector $V \in R^{m_L}$,
$
    J_i^\top V = \left( \frac{\partial \hat{y}_i}{\partial \theta} \right)^\top V
    = \frac{\partial \left( (\hat{y}_i)^\top V \right)}{\partial \theta}.
$
Hence, we can compute the vector $J ^{\top}_i V$ by backpropagating through the customized function $(\hat{y}_i)^\top V$. The other vectors needed in (\ref{equation_34}) can be computed similarly (See appendix).


\subsection{Computing \texorpdfstring{$D_t$}{Lg}}
\label{section_9}

We first demonstrate how to compute $D_t$ in (\ref{equation_34}) in an efficient way. For a given data point $i$, let $D W_i^{(l)}$ denote the gradient of $f_i(\theta)$ w.r.t $W^{(l)}$. 
As shown in the appendix, $D W_i^{(l)}$ is a rank-one matrix, i.e., $D W_i^{(l)} = (g_i^{(l)}) (v_i^{(l-1)})^\top$. Hence, the $(i, j)$ element of $D_t$ can be computed as
\begin{align*}
    \nabla f_i(\theta)^\top \nabla f_i(\theta)
    & = \sum_{l=1}^L \text{vec}\left( D W_i^{(l)} \right)^\top \text{vec}\left( D W_j^{(l)} \right)
    \\ & = \sum_{l=1}^L \text{vec}\left( (g_i^{(l)}) (v_i^{(l-1)})^\top \right)^\top \text{vec}\left( (g_j^{(l)}) (v_j^{(l-1)})^\top \right)
    \\ & = \sum_{l=1}^L \left( (g_i^{(l)})^\top (g_j^{(l)}) \right) \left( (v_i^{(l-1)})^\top v_j^{(l-1)} \right)
\end{align*}
For simplicity, we have ignored the $b$'s in the above. Therefore, we compute $D_t$ without explicitly writing out any $D W_i^{(l)}$, where all the vectors needed have been computed when
doing backpropagation for the gradient. 

Similarly, in the case of the GN matrix where $D_t$ is defined in (\ref{equation_34}), we need to compute $J_{i_1} J_{i_2}^\top$ for all $i_1, i_2 = 1, ..., N$. The $(j_1, j_2)$ element of $J_{i_1} J_{i_2}^\top$, namely $e_{j_1}^\top J_{i_1} J_{i_2}^\top e_{j_2}$, is the dot product of two "backpropagated" gradients $J_{i_1}^\top e_{j_1}$ and $J_{i_2}^\top e_{j_2}$, and hence can be computed efficiently. 
\footnote{
There are other ways to compute and "invert" $D_t$, e.g.,  solving $D_t d_t = -J g_t$ by the linear conjugate gradient method as in Hessian-free, with either the explicit value of $D_t$, or an oracle to compute the product of $D_t$ with an arbitrary vector. 
We tried both of these approaches and neither performed better that inverting $D_t$, i.e., computing $d_t$ exactly.
}




\section{Algorithm for Subsampled Second-Order Methods}

In this section, we summarize our subsampled GN and NG methods. 
Since we are focused on very large data sets, we estimate the gradient $\nabla f( )$, and $f( )$ in the reduction ratio $\rho_t$ (see (\ref{equation_24})) using a mini-batch $S^t_1$.


\begin{algorithm}[h]
	\caption{Sub-sampled Gauss-Newton / Natural Gradient method}
	\label{algo_1}
	\begin{algorithmic}[1]

        \STATE {\bf Parameters:} $N_1$, $N_2$, $0 < \epsilon < \frac{1}{2}$, learning rate $\alpha$
    	
        \FOR {$t = 0, 1, 2, ...$}
            
            \STATE
            Randomly select a mini-batch $S_1^t \subseteq [N]$ of size $N_1$ and $S_2^t \subseteq S_1^t$ of size $N_2$
    			
            \STATE Compute $g_t = \frac{1}{|S_1^t|} \sum_{i \in S_1^t} \nabla f_i(\theta^{(t)})$
            
            \STATE Compute $D_t$ and $p_t = - G_t^{-1} g_t$ as in (\ref{equation_34}) or (\ref{equation_35}) with mini-batch $S_2^t$

    		

            
            \STATE Update $\lambda$ using the LM style rule $\{$ see Section \ref{section_damping} $\}$ with $S_1^t$ mini-batch estimates of $f()$ to compute $\rho_t$ in (\ref{equation_24}) 
            

            
            \STATE
            set  $\theta^{(t+1)} = \theta^{(t)} + \alpha \cdot p_t$

    	\ENDFOR
	\end{algorithmic}
\end{algorithm}

The above algorithm works for both the GN and NG methods, the only differences being in computing and inverting $D_t$ and the backpropagations needed for computing $G_t^{-1} g_t$.

\section{Convergence}
\label{section_10}

Recall that the LM direction that we compute is
$p_t = - (B_t + \lambda_t I)^{-1} g_t.$
If we let $\Delta_t = ||p_t||$, it is well known that $p_t$ is the global solution to the trust-region (TR) problem
\begin{align*}
    \min_{p} m_t(p) \text{ s.t. } ||p|| \le \Delta_t.
\end{align*}

As in the classical TR method, we evaluate the quality of the quadratic model $m_t(\cdot)$ by computing $\rho_t$ defined by (\ref{equation_24}). However, while the classical TR method updates $\Delta_t$ depending on the value of $\rho_t$, we follow the LM approach of updating $\lambda_t$ instead. Loosely speaking, there is a reciprocal-like relation between $\lambda_t$ and $\Delta_t$. While Martens \cite{martens2010deep} proposed this way of updating $\lambda_t$ as a "heuristic", we are able to show that Algorithm \ref{algo_1}, with a exact (full) gradient (i.e., $N_1 = N$) and only updating $\theta_t$ when $\rho_t$ is above a certain threshold (say $\eta$),
converges to a stationary point under the following assumptions:


\begin{assumption}
    \label{assumption_5}
    $||B_t|| \le \beta$.
\end{assumption}

\begin{assumption}
    \label{assumption_6}
    $||\nabla^2 f(\theta)|| \le \beta_1$.
\end{assumption}


Our proof is similar to that used to prove convergence of the standard trust-region method ( e.g., see \cite{wright1999numerical}), and in particular makes use of the following:
\begin{lemma}
    \label{lemma_1}
    Under Assumption \ref{assumption_5}, there exists a constant $c_1 > 0$ such that
    $$m_t(0) - m_t(p_t) \ge c_1 ||g_t|| \Delta_t.$$
\end{lemma}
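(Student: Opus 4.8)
The plan is to mimic the classical Cauchy-point argument from trust-region theory. Recall that $p_t$ is the global minimizer of the quadratic model $m_t(p) = f(\theta^{(t)}) + g_t^\top p + \frac12 p^\top B_t p$ over the ball $\|p\|\le\Delta_t$ with $\Delta_t = \|p_t\|$. Since the minimizer over the whole ball can only do better than the minimizer along the steepest-descent ray $\{-\sigma g_t : \sigma\ge 0\}\cap\{\|p\|\le\Delta_t\}$, it suffices to lower bound $m_t(0) - m_t(p_t^{C})$, where $p_t^{C}$ is this constrained Cauchy point, and then observe $m_t(0)-m_t(p_t)\ge m_t(0)-m_t(p_t^{C})$.

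First I would write, for $\sigma\ge 0$ with $\sigma\|g_t\|\le\Delta_t$, the scalar function $\varphi(\sigma) = m_t(0) - m_t(-\sigma g_t) = \sigma\|g_t\|^2 - \tfrac12\sigma^2 g_t^\top B_t g_t$. Using Assumption~\ref{assumption_5}, $g_t^\top B_t g_t \le \|B_t\|\,\|g_t\|^2 \le \beta\|g_t\|^2$, so $\varphi(\sigma)\ge \sigma\|g_t\|^2 - \tfrac12\beta\sigma^2\|g_t\|^2$. Then I would split into two cases exactly as in the standard proof: either the unconstrained maximizer $\sigma^\star = \|g_t\|^2/(g_t^\top B_t g_t)$ (when the curvature is positive) satisfies $\sigma^\star\|g_t\|\le\Delta_t$, in which case plugging in $\sigma^\star$ and using $g_t^\top B_t g_t\le\beta\|g_t\|^2$ gives $m_t(0)-m_t(p_t)\ge \tfrac12\|g_t\|^4/(g_t^\top B_t g_t)\ge \tfrac{1}{2\beta}\|g_t\|^2$; or else $\sigma=\Delta_t/\|g_t\|$ is the constrained optimum, and substituting yields $m_t(0)-m_t(p_t)\ge \tfrac12\|g_t\|\Delta_t$ after again bounding the curvature term by $\beta$. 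Combining both cases gives $m_t(0)-m_t(p_t)\ge \tfrac12\|g_t\|\min\{\Delta_t,\ \|g_t\|/\beta\}$, the usual form of the lemma.

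The remaining step is to convert this $\min$ into the clean bound $c_1\|g_t\|\Delta_t$ stated in the lemma, and this is the one place where the LM structure (rather than a generic trust region) is used. Here I would exploit that $\Delta_t = \|p_t\| = \|(B_t+\lambda_t I)^{-1}g_t\|$ together with $\lambda_t = \lambda_{\mathrm{LM}}^{(t)} + \tau$ and $\|B_t\|\le\beta$: since $B_t\succeq 0$, we have $\|(B_t+\lambda_t I)^{-1}\|\le 1/\lambda_t$, but more importantly the smallest eigenvalue of $B_t+\lambda_t I$ is at most $\lambda_{\max}(B_t)+\lambda_t \le \beta+\lambda_t$, so $\Delta_t \ge \|g_t\|/(\beta+\lambda_t)$. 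Provided $\lambda_t$ is bounded above along the run — which I expect to be argued separately in the convergence proof, or can be assumed here, since $\lambda_{\mathrm{LM}}$ only boosts finitely often before $\rho_t$ becomes favorable — say $\lambda_t\le\bar\lambda$, this gives $\Delta_t\ge\|g_t\|/(\beta+\bar\lambda)$, hence $\|g_t\|/\beta \ge \Delta_t\cdot (\beta+\bar\lambda)/\beta$, so $\min\{\Delta_t,\|g_t\|/\beta\} \ge \Delta_t\cdot\min\{1,(\beta+\bar\lambda)/\beta\} = \Delta_t$, and we may take $c_1 = \tfrac12$.

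\textbf{Main obstacle.} The genuinely delicate point is the last paragraph: the bound $m_t(0)-m_t(p_t)\ge c_1\|g_t\|\Delta_t$ with $\Delta_t$ appearing \emph{linearly} (not as $\min\{\Delta_t,\text{const}\}$) is stronger than the textbook statement and relies on relating $\Delta_t$ to $\|g_t\|$ via the LM equation $p_t=-(B_t+\lambda_t I)^{-1}g_t$. If $\lambda_t$ were unbounded this relation alone would not suffice, so the cleanest route is to establish (or cite from the surrounding convergence argument) a uniform upper bound $\lambda_t\le\bar\lambda$; with that in hand the inequality $\Delta_t\ge\|g_t\|/(\beta+\bar\lambda)$ is immediate from $\lambda_{\max}(B_t+\lambda_t I)\le\beta+\bar\lambda$, and the lemma follows with an explicit constant depending only on $\beta$ and $\bar\lambda$. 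Everything else is the routine two-case Cauchy-point computation.
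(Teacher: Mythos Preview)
Your Cauchy-point computation leading to $m_t(0)-m_t(p_t)\ge \tfrac12\|g_t\|\min\{\Delta_t,\|g_t\|/\beta\}$ is fine, but the conversion step in your last two paragraphs contains a genuine error. From $\Delta_t\ge\|g_t\|/(\beta+\bar\lambda)$ you conclude ``hence $\|g_t\|/\beta\ge\Delta_t\cdot(\beta+\bar\lambda)/\beta$''; the inequality goes the other way. A lower bound on $\Delta_t$ gives an \emph{upper} bound on $\|g_t\|$ in terms of $\Delta_t$, which is useless here. What you actually need is an \emph{upper} bound on $\Delta_t$: since $B_t\succeq 0$ implies $\lambda_{\min}(B_t+\lambda_t I)\ge\lambda_t\ge\tau$, you get $\Delta_t=\|(B_t+\lambda_t I)^{-1}g_t\|\le\|g_t\|/\tau$, hence $\|g_t\|/\beta\ge(\tau/\beta)\Delta_t$, and then $\min\{\Delta_t,\|g_t\|/\beta\}\ge\min\{1,\tau/\beta\}\,\Delta_t$. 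So the lemma follows with $c_1=\tfrac12\min\{1,\tau/\beta\}$, using only the \emph{lower} bound $\lambda_t\ge\tau$ --- no upper bound $\bar\lambda$ is needed or wanted. This matters: the paper establishes the upper bound on $\lambda_t$ only later, inside the proof of Theorem~\ref{theorem_3}, and that argument invokes Lemma~\ref{lemma_1}; your proposed route would be circular.

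For comparison, the paper bypasses the Cauchy-point detour entirely. It uses the LM identity $-g_t^\top p_t=p_t^\top(B_t+\lambda_t I)p_t$ directly to get $m_t(0)-m_t(p_t)=\tfrac12 p_t^\top B_t p_t+\lambda_t\|p_t\|^2\ge\lambda_t\|p_t\|^2$, and then bounds $c_1\|g_t\|\Delta_t=c_1\|(B_t+\lambda_t I)p_t\|\,\|p_t\|\le c_1(\beta+\lambda_t)\|p_t\|^2\le\lambda_t\|p_t\|^2$ by choosing $c_1=\tau/(\beta+\tau)\le\lambda_t/(\beta+\lambda_t)$. This is shorter and makes the dependence on $\tau$ (rather than any $\bar\lambda$) transparent from the outset.
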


\begin{proof}
Because $p_t = - (B_t + \lambda_t I)^{-1} g_t$, $-g_t^T p_t = p_t^T \left( B_t + \lambda_t I \right) p_t$. Then since $B_t \succeq 0$ , we have
\begin{align*}
    m_t(0) - m_t(p_t)
    & = - g_t^T p_t - \frac{1}{2} p_t^T B_t p_t
 \ge \lambda_t ||p_t||^2.
\end{align*}
On the other hand, since  $\frac{\lambda_t}{\beta + \lambda_t} \ge \frac{\tau}{\beta + \tau} = c_1 > 0$, $||B_t|| \le \beta$  and $\Delta_t = ||p_t||$,
\begin{align*}
    c_1 ||g_t|| \Delta_t
    & = c_1 ||- (B_t + \lambda_t I) p_t|| ||p_t||
    \le c_1 \left( \beta + \lambda_t \right) ||p_t||^2
\le \lambda_t ||p_t||^2.
\end{align*}
\end{proof}

Using Lemma \ref{lemma_1}, we now prove the global convergence of the full-gradient variant of Algorithm \ref{algo_1}:
\begin{theorem}
    \label{theorem_3}
    Suppose in Algorithm \ref{algo_1}, we set $N_1 = N$, $\alpha = 1$ and only update $\theta_t$ when $\rho_t \ge \eta$ where $0 < \eta < \epsilon$.
    Then, under Assumptions \ref{assumption_5} and \ref{assumption_6}, if $f$ is bounded below. we have that
    $\lim_{t \to \infty} ||g_t|| = 0.$
\end{theorem}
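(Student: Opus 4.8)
The plan is to adapt the classical trust-region convergence argument (as in \cite{wright1999numerical}), with the key extra ingredient being that here $\lambda_t$, rather than a radius, is the quantity updated, and one must show it stays bounded. Throughout we use that $N_1 = N$ means $g_t = \nabla f(\theta^{(t)})$ exactly, and (when $g_t \neq 0$) that $m_t(0) - m_t(p_t) > 0$, so $\rho_t$ is well defined; if $g_t = 0$ at some iteration the conclusion is immediate.

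First I would quantify the model error. By Taylor's theorem there is $\xi_t$ on the segment $[\theta^{(t)}, \theta^{(t)} + p_t]$ with $f(\theta^{(t)}+p_t) - m_t(p_t) = \tfrac12 p_t^\top(\nabla^2 f(\xi_t) - B_t) p_t$, so by Assumptions~\ref{assumption_5} and~\ref{assumption_6}, $|f(\theta^{(t)}+p_t) - m_t(p_t)| \le \tfrac12(\beta+\beta_1)\Delta_t^2$. Since $B_t \succeq 0$ and $\|B_t\|\le\beta$, the eigenvalues of $B_t + \lambda_t I$ lie in $[\lambda_t,\beta+\lambda_t]$, hence $\tfrac{\|g_t\|}{\beta+\lambda_t} \le \Delta_t = \|(B_t+\lambda_t I)^{-1}g_t\| \le \tfrac{\|g_t\|}{\lambda_t}$. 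Combining these with Lemma~\ref{lemma_1},
\[
|1-\rho_t| = \frac{|f(\theta^{(t)}+p_t)-m_t(p_t)|}{m_t(0)-m_t(p_t)} \le \frac{(\beta+\beta_1)\Delta_t}{2c_1\|g_t\|} \le \frac{\beta+\beta_1}{2c_1\lambda_t}.
\]

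Next I would use this to show $\lambda_t$ is bounded above. Put $T := (\beta+\beta_1)/(2c_1\epsilon)$. If $\lambda_t > T$ then $|1-\rho_t| < \epsilon$, so $\rho_t > 1-\epsilon > \epsilon > \eta$; hence the step is accepted and $\lambda_{\text{LM}}$ is multiplied by $drop < 1$. In particular $\lambda_{\text{LM}}$ is boosted (by $boost$) only when $\rho_t < \epsilon$, which requires $\lambda_t \le T$, so by induction $\lambda_t \le \max\{\lambda_0,\, boost\cdot T\} + \tau =: \bar\lambda$ for all $t$. The same observation shows the iteration cannot stall: at a fixed iterate $\theta^{(t)}$ with $g_t\neq 0$, each rejected step multiplies $\lambda_{\text{LM}}$ by $boost$, so after finitely many rejections $\lambda_t > T$ and a step is accepted. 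Finally I would sum the decrease: at every accepted iteration, $\rho_t\ge\eta$ together with Lemma~\ref{lemma_1} and $\Delta_t \ge \|g_t\|/(\beta+\bar\lambda)$ gives $f(\theta^{(t)}) - f(\theta^{(t+1)}) \ge \eta(m_t(0)-m_t(p_t)) \ge \eta c_1\|g_t\|\Delta_t \ge \frac{\eta c_1}{\beta+\bar\lambda}\|g_t\|^2$, while rejected steps leave $f$ unchanged. Since $\{f(\theta^{(t)})\}$ is non-increasing and bounded below, $\sum_{t:\,\rho_t\ge\eta}\|g_t\|^2 < \infty$, so $\|g_t\|\to 0$ along accepted iterations; because $g_t$ is constant between consecutive accepted iterations and the iteration never stalls at a non-stationary point, this upgrades to $\lim_{t\to\infty}\|g_t\| = 0$.

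I expect the main obstacle to be the second step, the uniform upper bound on $\lambda_t$ (and the companion fact that a step cannot be rejected infinitely often). This is exactly the point at which the Levenberg–Marquardt update rule of Section~\ref{section_damping}, described only as a heuristic in \cite{martens2010deep}, has to be matched to the trust-region machinery: one needs $\eta < \epsilon < \tfrac12$ and $drop < 1 < boost$ to interact so that $\lambda_t$ cannot diverge (which would shrink $\Delta_t$, hence the step, to zero while $\|g_t\|$ stays bounded away from zero). Once $\lambda_t \le \bar\lambda$ and stalling is ruled out, the remainder is the textbook sufficient-decrease argument.
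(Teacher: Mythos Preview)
Your proposal is correct and follows essentially the same approach as the paper: bound the model error via Taylor's theorem and Assumptions~\ref{assumption_5}--\ref{assumption_6}, deduce that $|\rho_t-1|$ is $O(1/\lambda_t)$ so $\lambda_t$ stays uniformly bounded, rule out infinite rejection, and then sum the sufficient decrease using Lemma~\ref{lemma_1} together with $\Delta_t \ge \|g_t\|/(\beta+\bar\lambda)$. The only cosmetic difference is that the paper lower-bounds $m_t(0)-m_t(p_t)$ by the intermediate inequality $\lambda_t\|p_t\|^2$ from inside the proof of Lemma~\ref{lemma_1} (yielding $|\rho_t-1|\le (\beta+\beta_1)/(2\lambda_t)$ directly), whereas you invoke the lemma's stated bound $c_1\|g_t\|\Delta_t$ and then use $\Delta_t\le\|g_t\|/\lambda_t$; this costs you a harmless factor of $1/c_1$ in the threshold $T$ but changes nothing structurally.
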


\begin{proof}

We first show that $\lambda_t$ is bounded above by some constant $\Lambda_1$: Recalling (\ref{equation_24}), at iteration $t$, we have
\begin{align}
    |\rho_t - 1|
    & = \left| \frac{m_t(p_t) - f(\theta^{(t)} + p_t)}{ m_t(0) - m_t(p_t)} \right|.
    \label{equation_25}
\end{align}
By Taylor's theorem,
$
    f(\theta^{(t)} + p_t) = f(\theta^{(t)}) + \nabla f(\theta^{(t)})^T p_t + \frac{1}{2} p_t^T \nabla^2 f(\theta^{(t)} + \mu p_t) p_t
$
for some $\mu \in (0, 1)$. Hence,
\begin{align}
    & \left| m_t(p_t) - f(\theta^{(t)} + p_t) \right|
    = \left| \frac{1}{2} p_t^T B_t p_t - \frac{1}{2} p_t^T \nabla^2 f(\theta^{(t)} + \mu p_t) p_t \right|
    \le \frac{1}{2} (\beta + \beta_1) ||p_t||^2.
    \label{equation_26}
\end{align}



By (\ref{equation_25}) and (\ref{equation_26}), we have that
\begin{align*}
    |\rho_t - 1|
    \le \frac{\frac{1}{2} (\beta + \beta_1) ||p_t||^2}{\lambda_t ||p_t||^2}
    = \frac{\frac{1}{2} (\beta + \beta_1)}{\lambda_t}.
\end{align*}
Hence, there exists a $\Lambda > 0$ such that for all $\lambda_t \ge \Lambda$, we have $|\rho_t - 1| \le \epsilon$, and thus, $\rho_t \ge 1 - \epsilon$. Consequently, by the way $\lambda_{\text{LM}}$ is updated, for all $t$, $\lambda_t \le boost \cdot                                                                      \Lambda + \tau= \Lambda_1$; i.e.,
 $\lambda_t$ is bounded above. By Assumption \ref{assumption_5}$, ||B_t + \lambda_t I||$ is also bounded, i.e.,  
$
    ||B_t + \lambda_t I||
    \le \beta + \Lambda_1
$.
Hence, the minimum eigenvalue of $(B_t + \lambda_t)^{-1}$ is no less than $\frac{1}{\beta + \Lambda_1}$. Finally, by the Cauchy-Schwarz inequality and the fact that $\Delta_t = ||p_t|| = ||-(B_t + \lambda_t I)^{-1} g_t||$, we have that
$
    ||g_t|| \Delta_t
    \ge ||g_t (B_t + \lambda_t I)^{-1} g_t||
    \ge \frac{1}{\beta + \Lambda_1} ||g_t||^2
$.

Let $T_1 = \{ t = 0,1,... \ | \ \rho_t \ge \eta \}$ denote the set of indices $t$ such that step $p_t$ is accepted. For any $t \in T_1$, by definition of $\rho_t$ and Lemma \ref{lemma_1},
\begin{align}
    f(\theta^{(t)}) - f(\theta^{(t+1)})
    > \eta c_1 ||g_t|| \Delta_t
    \ge \eta c_1  \frac{1}{\beta + \Lambda_1} ||g_t||^2.
    \label{equation_30}
\end{align}

We now show that $|T_1| = \infty$ (unless for some $t$, $g_t = 0$ and Algorithm \ref{algo_1} stops finitely): Suppose
that this is not the case. Then there exists a $T > 0$ such that for all $t \ge T$, $p_t$ is rejected (i.e., $\rho_t \le \eta < \epsilon$). Then, $\lambda_t \to \infty$, contradicting the fact that $\lambda_t$ is bounded. Because $|T_1| = \infty$, $\lim_{t \to \infty} ||g_t|| = \lim_{t \in T_1} ||g_t||$. Because $f$ is bounded below and $f(\theta^{(t)})$ is non-increasing, the left-hand-side of (\ref{equation_30}) goes to zero. Hence, the right-hand-side also goes to zero, which implies $\lim_{t \in T_1} ||g_t|| = 0$. 
\end{proof}

\section{Computational Costs of Proposed Algorithms}

In this section we discuss the computational cost of our SMW-based GN and NG algorithms, and compare them with SGD, Hessian-free (HF) and KFAC. 
First, several basic operations including computing $f_i(\theta)$, $\nabla f_i(\theta)$, $J V$ and $J^T v$ all requires $O(n)$ time for a single data point. Hence, all algorithms have a cost of $O(N_1 n)$ for computing the stochastic gradient $g_t$. 

For the second order methods, the following table summarizes the extra costs for computing the LM direction $p_t$, where $n_{\text{HF}}$ denotes the number of CG iterations used in Hessian-free.

\begin{table}[h]
\centering
\vskip 0.1in

\setlength{\extrarowheight}{4.5 pt}

\begin{tabular}{ccccccc}
    \hline
    Algorithm & Cost
    \\ 
    \hline
    SMW-GN &  $O(m_L N_2 n + m_L^2 N_2^2 \sum_l m_l + m_L^3 N_2^3)$
    \\
    SMW-NG & $O(N_2 n + N_2^2 \sum_l m_l + N_2^3)$
    \\
    HF & $n_{\text{HF}} \times O(N_2 n)$
    \\
    KFAC & $O(\sum_l m_l^3 + N_2 \sum_l m_l^2)$
	\\
	\hline

\end{tabular}
\end{table}

\subsection{Comparison Between Algorithms}



Since $n$ is usually extremely large in NNs, we see that in SMW-GN the multiplier of the term involving $n$ is reduced from $n_{\text{HF}} N_2$  in HF to $m_L N_2$. KFAC has a term proportional to $\sum_l m_l^3$, which is of an even higher order than $n$. 


For all of the second-order methods, when $N_2 \ll N_1$, the overhead for each iteration is usually compensated for by the better direction generated by these methods for updating the parameters. However, even if the condition $N_2 \ll N_1$ is not met, as long as $N_2$ is reasonably small, the overhead is controllable. Consequently, one should choose a relatively small $N_2$ when implementing our SMW-based algorithms. 


\section{Numerical Experiments}

\label{section_17}

\begin{figure}[h]
    \hspace*{-1.6cm}\includegraphics[width = 1.25 \textwidth]{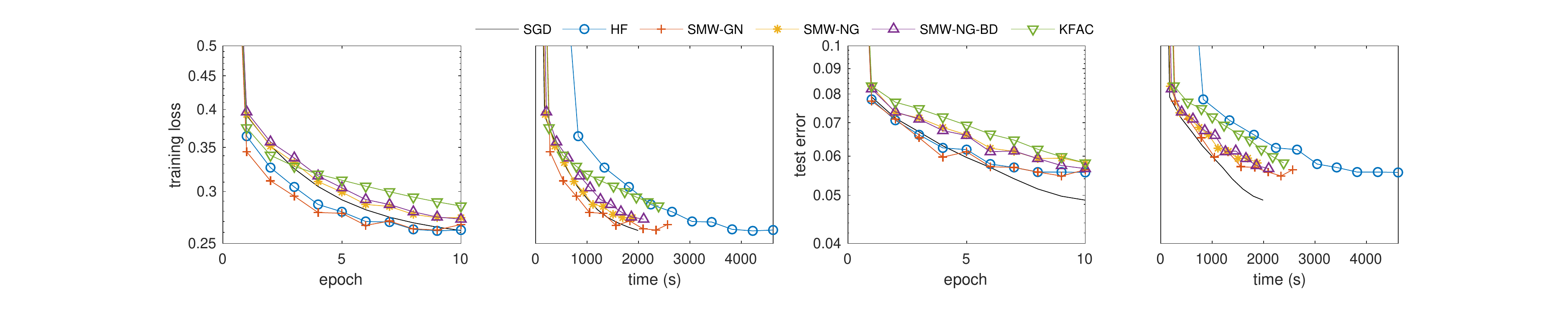}
    \caption{Results on MNIST classification problem with $N_1 = 60$, $N_2 = 30$. Learning rates: 0.1}
    \label{figure_6}
\end{figure}

\begin{figure}[h]
    \hspace*{-1.6cm}\includegraphics[width= 1.25 \textwidth]{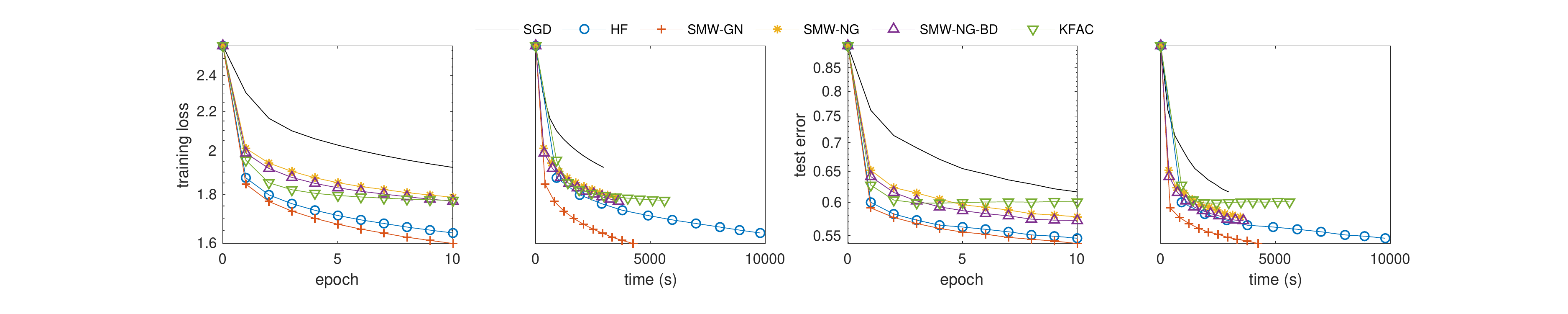}
    \caption{Results on CIFAR classification problem with $N_1 = 100$, $N_2 = 50$. Learning rates: 0.01}
    \label{figure_7}
\end{figure}

\begin{figure}[h]
    \hspace*{-1.6cm}\includegraphics[width= 1.25 \textwidth]{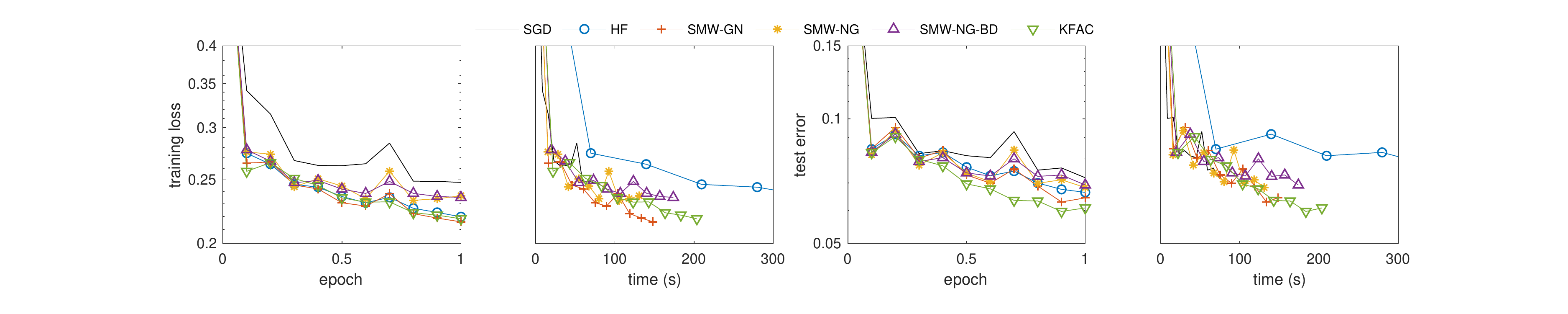}
    \caption{Results on webspam classification problem with $N_1 = 60$, $N_2 = 30$. Learning rates: 0.4 (SGD), 0.05 (HF and SMW-GN), 0.1 (SMW-NG, SMW-NG and KFAC)}
    \label{figure_8}
\end{figure}

We compared our algorithms SMW-GN and SMW-NG with SGD, HF and KFAC. The KFAC algorithm was implemented using block-diagonal approximaton, without re-scaling and momentum (see section 6.4 and 7 of \cite{martens2015optimizing}). We also included a block diagonal version of SMW-NG, namely, SMW-NG-BD, where in each block, the matrix is inverted by SMW, in order to mimic the block diagonal approximation used by KFAC. 

For all of the experiments reported in this section, we set the initial value of $\lambda_{\text{LM}}$ to be 1, $boost = 1.01$, $drop = 0.99$, $\epsilon = 1/4$, $\tau = 0.001$, same as in \cite{kiros2013training}.
All algorithms were implemented in MATLAB R2019a and run on an Intel Core i5 processor.
We tested the performance of the algorithms on several classification problems. We reported both training loss and testing error. The data sets were scaled to have zero means and unit variances.


    {\bf MNIST} (\cite{lecun1998gradient}):
    The training set is of size $N = 6 \times 10^4$. We used a NN with one hidden layer of size 500 and logistic activation, i.e., $(m_0, m_1, m_2) = (784, 500, 10)$, where the first and last layers are the size of input and output. The output layer was softmax with cross entropy.  The learning rate for SGD was set to be 0.1, tuned from $\{ 0.01, 0.05, 0.1, 0.5 \}$. Initial learning rates for other methods were also set to be 0.1 for purposes of comparison. We did not tune the learning rates for second-order methods because they adaptively modify the rate by updating $\lambda_{\text{LM}}$ as they proceed. We ran each algorithm for 10 epochs. The results are presented in Figure \ref{figure_6}.



    
    {\bf CIFAR-10} (\cite{krizhevsky2009learning}):
    The training set is of size $N = 5 \times 10^4$. We used a NN with two hidden layers of size 400 and logistic activation, i.e., $(m_0, m_1, m_2, m_3) = (3072, 400, 400, 10)$. The output layer was softmax with cross entropy. The learning rate was set to be 0.01. We ran each algorithm for 10 epochs. The results are presented in Figure \ref{figure_7}.

    {\bf webspam} (\cite{CC01a}):
    The training set is of size $N = 3 \times 10^5$. We used a NN with two hidden layers of size 400 and logistic activation, i.e., $(m_0, m_1, m_2, m_3) = (254, 400, 400, 1)$. Because this is a binary classification, we set the output layer to be logistic with binary cross entropy. 
    We tuned the learning rate for SGD and initial learning rates for other algorithms separately, and the results shown in Figure \ref{figure_8} all used their corresponding best learning rates, which are indicated there.
    We ran each algorithm for 1 epoch. 

\subsection{Discussion of results}


Interestingly, the relative ranking of the algorithms changes from one problem to another, indicating that the relative performance of the algorithms depends upon the data set, structure of the NN, and parameter settings. 

From our experimental results, we see that SMW-GN is always faster than HF in terms of both epochs and clock-time, which is consistent with our analysis above. 
KFAC sometimes performs very well, not surprisingly, because it accumulates more and more curvature information with each new mini-batch. But it also slows down considerably when the NN has wide layers (see Figure \ref{figure_7}).  
Moreover, the three experiments were done differently, mimicing the different practices used when training a NN model, namely, tuning learning rates for all algorithms, tuning learning rates for one algorithm and then using it for all, or simply choosing a conservative learning rate.
After carefully tuning the learning rate, SGD can perform as well as second-order methods as shown in Figure \ref{figure_6}. However, if learning rate is chosen to be more conservative or typical (e.g., 0.01), it may suffer from slow convergence compared with second-order methods (see Figure \ref{figure_7}). If we want to get lower training loss or testing error, we may have to run it for far more epochs / time.

The key take-away from our numerical results is that our SMW-based algorithms that are based on the Gauss-Newton and natural 
gradient methods are competitive with their Hessian-free and Kronecker-factor implementations, HF and KFAC, as well as SGD. In particular, SMW-GN performs extremely well without requiring any parameter tuning.

\section{Summary and Future Research Directions}

In this paper, we proposed efficient LM-NG/GN methods for training neural networks, semi-stochastic versions of which are provably convergent, while fully stochastic versions are competitive with off-the-shelf algorithms including SGD and KFAC. A promising future research topic is the study of how to adapt gradient (diagonal) rescaling techniques like Adam \cite{kingma2014adam} and AdaGrad \cite{duchi2011adaptive}, that are based on running averages of the first and second moments of the stochastic gradients encountered during the course of the algorithm, to our GN and NG algorithms.  This is a challenging topic, since both NG and NG based algorithms already incorporate non-diagonal rescalings. A second promising future research topic is the study of how to, starting from relatively small gradient and GN and Fisher matrix mini-batches, increase their sizes as needed, by evaluating their variances (e.g., see \cite{bollapragada2018adaptive} and references therein).  Finally,  the structure of our algorithms is well-suited for parallel computation. Besides the common approach of distributing the data across different processors, one can compute terms involving $D_t$ in parallel, so that the cost of second-order computations becomes comparable to that for evaluating gradients.    




\medskip
\small
\bibliography{neurips_2019.bib}

\newpage



\appendix

\section{Computational techniques}
\label{section_16}

In this section, we present the major computational techniques used by our algorithm, and present their pseudo-codes.

\subsection{Network computation (forward pass)}

We have the Algorithm \ref{algo_6}. 

\begin{algorithm}[h]
	\caption{Forward Pass: Compute the neural network w.r.t. a single input $x$
	}
	\label{algo_6}
	\begin{algorithmic}[1]
	\STATE {\bf Input:} $\theta$, $x$
    \STATE {\bf Output:} $\hat{y}$ or $h^{(l)}$, $v^{(l)}$ ($l = 1, ..., L$)

    \STATE unpack $\theta$ to be $W^{(l)}$, $b^{(l)}$ ($l = 1, ..., L$)

	\STATE $v^{(0)} = x$
    \FOR {$l = 1, ..., L$}
    	\STATE $h^{(l)} = W^{(l)} v^{(l-1)} + b^{(l)}$
    	\STATE $v^{(l)} = \phi^{(l)}(h^{(l)})$
    \ENDFOR
    \STATE $\hat{y} = v^{(L)}$

	\end{algorithmic}
\end{algorithm}

\subsection{Gradient computation (backward pass)}
\label{section_15}

In order to compute the gradient $\nabla f(\theta)$, it suffices to compute $\nabla f_i(\theta)$ for $i = 1, ..., N$.

For $i = 1, ..., N$,
\begin{align*}
	& \nabla f_i(\theta)
    = \frac{\partial f_i(\theta)}{\partial \theta}
    = \frac{\partial \varepsilon(\hat{y}_i(\theta), y_i)}{\partial \theta}
    = \frac{\partial \varepsilon(\hat{y}_i(\theta), y_i)}{\partial \hat{y}_i(\theta)} \frac{\partial \hat{y}_i(\theta)}{\partial \theta}
    = \frac{\partial \varepsilon(\hat{y}_i, y_i)}{\partial \hat{y}_i} \frac{\partial v_i^{(L)}}{\partial \theta}.
\end{align*}
Hence,
\begin{align}
    & \frac{\partial f_i(\theta)}{\partial b^{(l)}}
    = \frac{\partial \varepsilon(\hat{y}_i, y_i)}{\partial \hat{y}_i} \frac{\partial v_i^{(L)}}{\partial b^{(l)}}
    \nonumber
    = \frac{\partial \varepsilon(\hat{y}_i, y_i)}{\partial \hat{y}_i}
    \frac{\partial v_i^{(L)}}{\partial h_i^{(L)}}
    \frac{\partial h_i^{(L)}}{\partial v_i^{(L-1)}}
    \cdots
    \frac{\partial h_i^{(l+1)}}{\partial v_i^{(l)}}
    \frac{\partial v_i^{(l)}}{\partial h_i^{(l)}}
    \frac{\partial h_i^{(l)}}{\partial b^{(l)}}
    \nonumber
    \\ = &
    \frac{\partial \varepsilon(\hat{y}_i, y_i)}{\partial \hat{y}_i}
    \frac{\partial v_i^{(L)} }{\partial h_i^{(L)}}
    W^{(L)}
    \cdots
    W^{(l+1)}
    \frac{\partial v_i^{(l)} }{\partial h_i^{(l)}}.
    \label{equation_12}
\end{align}

Since
\begin{align*}
	& \frac{\partial h_i^{(l)}}{\partial \text{vec}({W}^{(l)})}
	= 
    \begin{pmatrix}
    	\frac{\partial h_i^{(l)}}{\partial W_{:,1}^{(l)}} \cdots \frac{\partial h_i^{(l)}}{\partial W_{:,m_{l-1}}^{(l)}}
    \end{pmatrix}
    =
    \begin{pmatrix}
    	(v_i^{(l-1)})_1 I_{m_l \times m_l} & \cdots & (v_i^{(l-1)})_{m_{l-1}} I_{m_l \times m_l}
    \end{pmatrix},
\end{align*}
similarly,
\begin{align*}
    \frac{\partial f_i(\theta)}{\partial \text{vec}({W}^{(l)})}
    = & \frac{\partial \varepsilon(\hat{y}_i, y_i)}{\partial \hat{y}_i}
    \frac{\partial v_i^{(L)}}{\partial h_i^{(L)}}
    \frac{\partial h_i^{(L)}}{\partial v_i^{(L-1)}}
    \cdots
    \frac{\partial h_i^{(l+1)}}{\partial v_i^{(l)}}
    \frac{\partial v_i^{(l)}}{\partial h_i^{(l)}}
    \frac{\partial h_i^{(l)}}{\partial \text{vec}({W}^{(l)})}
    \\ = &
    \frac{\partial \varepsilon(\hat{y}_i, y_i)}{\partial \hat{y}_i}
    \frac{\partial v_i^{(L)} }{\partial h_i^{(L)}}
    W^{(L)}
    \cdots
    W^{(l+1)}
    \frac{\partial v_i^{(l)} }{\partial h_i^{(l)}}
    \\ & \cdot
    \begin{pmatrix}
    	(v_i^{(l-1)})_1 I_{m_l \times m_l} & \cdots & (v_i^{(l-1)})_{m_{l-1}} I_{m_l \times m_l}
    \end{pmatrix}.
\end{align*}
Thus,
\begin{align}
    \frac{\partial f_i(\theta)}{\partial W^{(l)}}
    =
    \frac{\partial \varepsilon(\hat{y}_i, y_i)}{\partial \hat{y}_i}
    \frac{\partial v_i^{(L)} }{\partial h_i^{(L)}}
    W^{(L)}
    \cdots
    W^{(l+1)}
    \frac{\partial v_i^{(l)} }{\partial h_i^{(l)}}
    (v_i^{(l-1)})^T.
    \label{equation_13}
\end{align}

By (\ref{equation_12}) and (\ref{equation_13}), we have the following recursion
\begin{align*}
	\frac{\partial f_i(\theta)}{\partial b^{(l)}}
    & = \frac{\partial f_i(\theta)}{\partial b^{(l+1)}} W^{(l+1)}
    \frac{\partial v_i^{(l)} }{\partial h_i^{(l)}},
    \\ \frac{\partial f_i(\theta)}{\partial W^{(l)}}
    & = \frac{\partial f_i(\theta)}{\partial b^{(l)}} (v_i^{(l-1)})^T.
\end{align*}

Combining all of the above yields Algorithm \ref{algo_7}. 

\begin{algorithm}[h]
	\caption{Backward Pass: Compute the gradient $\nabla f_i(\theta)$
	}
  	\label{algo_7}
	\begin{algorithmic}[1]
    	\STATE {\bf Input:} $\theta$, $h^{(l)}$, $v^{(l)}$ ($l = 1, ..., L$), $x$, $y$
    	\STATE {\bf Output:} $\nabla f_i(\theta)$

        \STATE unpack $\theta$ to be $W^{(l)}$, $b^{(l)}$ ($l = 1, ..., L$)

        
        \STATE $g^{(L)} = \frac{\partial \varepsilon(v^{(L)}, y)}{\partial \hat{y}}
        \frac{\partial v_i^{(L)} }{\partial h_i^{(L)}}$
        \STATE $b_{1}^{(L)} = g^{(L)}$
    	\STATE $W_{1}^{(L)} = g^{(L)} (v^{(L-1)})^T$
        \FOR {$l = L-1, ..., 1$}
        	\STATE $g^{(l)} = g^{(l+1)} W^{(l+1)}
            \frac{\partial v_i^{(l)} }{\partial h_i^{(l)}}$
            \STATE $b_{1}^{(l)} = g^{(l)}$
            \STATE $W_{1}^{(l)} = g^{(l)} (v^{(l-1)})^T$
        \ENDFOR

    	\STATE pack $W_1^{(l)}$, $b_1^{(l)}$ ($l = 1, ..., L$) to be $\nabla f_i(\theta)$
        \RETURN $\nabla f_i(\theta)$
	\end{algorithmic}
\end{algorithm}

\subsection{\texorpdfstring{$J_i$}{Lg}}

Although we do not explicitly compute $J_i$ in our algorithms, deriving an expression for $J_i$ will help us in deriving expressions for the quantities we need.


Noticing that $J_i = \frac{\partial \hat{y}_i}{\partial \theta} = \frac{\partial v_i^{(L)}}{\partial \theta}$, we'd like to get an recursion w.r.t. $\frac{\partial v_i^{(0)}}{\partial \theta}$, ..., $\frac{\partial v_i^{(L)}}{\partial \theta}$. Because $v_i^{(0)} \equiv x_i$, we have that $\frac{\partial v_i^{(0)}}{\partial \theta} = 0$. For $l = 1, ..., L$,
\begin{align}
	\frac{\partial h_i^{(l)}}{\partial \theta}
    & = \frac{\partial \left( W^{(l)} v_i^{(l-1)} + b^{(l)} \right)}{\partial \theta}
    \nonumber
    = \frac{\partial W^{(l)}}{\partial \theta} v_i^{(l-1)} + W^{(l)} \frac{\partial v_i^{(l-1)}}{\partial \theta} + \frac{\partial b^{(l)}}{\partial \theta},
    \\ \frac{\partial v_i^{(l)}}{\partial \theta}
    & = \frac{\partial v_i^{(l)}}{\partial h_i^{(l)}} \frac{\partial h_i^{(l)}}{\partial \theta},
    \label{equation_3}
\end{align}
where $\frac{\partial W^{(l)}}{\partial \theta}$, $\frac{\partial b^{(l)}}{\partial \theta}$ are some abstract notions that will be specified later.

\subsection{\texorpdfstring{$J_i \theta_1$}{Lg}}
\label{Section_5}




    We use the subscript $1$ to denote the directional derivative of some variables as a function of $\theta$ along the direction $\theta_1$. Because $J_i = \frac{\partial v_i^{(L)}}{\partial \theta}$, we have that
    \begin{align*}
        J_i \theta_1 & = \frac{\partial v_i^{(L)}}{\partial \theta} \theta_1 = v_{i,1}^{(L)}.
    \end{align*}

    We can also decompose $\theta_1$ into $\text{vec}({W}_1^{(l)})$ (hence, $W_1^{(l)}$) and $b_1^{(l)}$ (for all $l = 1, ..., L$), which agrees with the directional derivative notation.

    Note that $v_{i,1}^{(0)} = \frac{\partial v_i^{(0)}}{\partial \theta} \theta_1 = 0$. Then, recursively, by (\ref{equation_3}), for $l = 1, ..., L$,
	\begin{align}
		h_{i,1}^{(l)}
		& = \frac{\partial h_i^{(l)}}{\partial \theta} \theta_1
        \nonumber
        = \left( \frac{\partial W^{(l)}}{\partial \theta} v_i^{(l-1)} + W^{(l)} \frac{\partial v_i^{(l-1)}}{\partial \theta} + \frac{\partial b^{(l)}}{\partial \theta} \right) \theta_1
        \nonumber
        \\ & = \frac{\partial W^{(l)}}{\partial \theta} \theta_1 v_i^{(l-1)} + W^{(l)} \frac{\partial v_i^{(l-1)}}{\partial \theta} \theta_1 + \frac{\partial b^{(l)}}{\partial \theta} \theta_1
        \nonumber
        = W_1^{(l)} v_i^{(l-1)} + W^{(l)} v_{i,1}^{(l-1)} + b_1^{(l)},
        \\ v_{i,1}^{(l)}
        & = \frac{\partial v_i^{(l)}}{\partial \theta} \theta_1
        = \frac{\partial v_i^{(l)}}{\partial h_i^{(l)}} \frac{\partial h_i^{(l)}}{\partial \theta} \theta_1
        = \frac{\partial v_i^{(l)}}{\partial h_i^{(l)}}  h_{i,1}^{(l)}.
        \label{equation_4}
	\end{align}

    This leads to Algorithm \ref{algo_8}.

\begin{algorithm}[h]
	\caption{Compute the product of $J_i$ and a vector $\theta_1$
	}
	\label{algo_8}
	\begin{algorithmic}[1]
        \STATE {\bf Input:} $\theta_1$, $\theta$, $h$, $v$
        \STATE {\bf Output:} $J_i \theta_1$
        \STATE unpack $\theta$ to be $W^{(l)}$, $b^{(l)}$ ($l = 1, ..., L$)
        \STATE unpack $\theta_1$ to be $W_1^{(l)}$, $b_1^{(l)}$ ($l = 1, ..., L$)
        \STATE $v_{1}^{(0)} = 0$
        \FOR {$l = 1, ..., L$}
        	\STATE $h_{1}^{(l)} = W_1^{(l)} v^{(l-1)} + W^{(l)} v_{1}^{(l-1)} + b_1^{(l)}$
        	\STATE $v_{1}^{(l)} = \frac{\partial v^{(l)}}{\partial h^{(l)}}  h_{1}^{(l)}$
        \ENDFOR

        \RETURN $v_{1}^{(L)}$
    \end{algorithmic}
\end{algorithm}


\subsection{\texorpdfstring{$J_i^T x$}{Lg}}
\label{section_2}


The idea behind computing $J_i^T x$ ($x$ being an arbitrary vector) is even more tricky than $J_i \theta_1$. For given $J_i$ and $x$, we define $s(\theta_2) = \theta_2^T (J_i^T x) = (J_i \theta_2)^T x = (v_{i,2}^{(L)})^T x$. 

We denote the transpose of the partial derivative of $s$ w.r.t. a variable by adding a hat on the variable, e.g, $\hat{\theta}_2 = \left( \frac{\partial s}{\partial \theta_2} \right)^T$. Because $\hat{\theta}_2 = J_i^T x$, it suffices to compute
$
	\hat{\theta}_2
    = (\text{vec} \left( \hat{W}_2^{(1)} \right)^T, \left( \hat{b}_2^{(1)} \right)^T, \cdots, \text{vec} \left( \hat{W}_2^{(L)} \right)^T, \left( \hat{b}_2^{(L)} \right)^T)^T.
$

Notice that $\hat{v}_{i,2}^{(L)} = x$, which is given. For $l = L, L-1, ..., 2$, when $\hat{v}_{i,2}^{(l)}$ is given, by (\ref{equation_4}), we have that
\begin{align*}
	& v_{i,2}^{(l)}
    = \frac{\partial v_i^{(l)}}{\partial h_i^{(l)}}  h_{i,2}^{(l)}
    \\ \Rightarrow &
    \hat{h}_{i,2}^{(l)}
    = \left( \frac{\partial s}{\partial h_{i,2}^{(l)}} \right)^T
    = \left( \frac{\partial s}{\partial v_{i,2}^{(l)}} \frac{\partial v_{i,2}^{(l)}}{\partial h_{i,2}^{(l)}} \right)^T
    = \left( \frac{\partial v_{i,2}^{(l)}}{\partial h_{i,2}^{(l)}} \right)^T {\hat{v}_{i,2}^{(l)}}
    \\
    & h_{i,2}^{(l)}
	= W_2^{(l)} v_i^{(l-1)} + W^{(l)} v_{i,2}^{(l-1)} + b_2^{(l)}
    \\ \Rightarrow &
    \hat{v}_{i,2}^{(l-1)}
    = \left( \frac{\partial s}{\partial v_{i,2}^{(l-1)}} \right)^T
    = \left( \frac{\partial s}{\partial h_{i,2}^{(l)}} \frac{\partial h_{i,2}^{(l)}}{\partial v_{i,2}^{(l-1)}} \right)^T
    = \left( W^{(l)} \right)^T {\hat{h}_{i,2}^{(l)}}
    \end{align*}
    \begin{align*}
    \\ &
    \hat{W}_2^{(l)}
    = \text{vec}^{-1} \left( \widehat{\text{vec} \left( W_2^{(l)} \right)} \right)
    = \text{vec}^{-1} \left( \left( \frac{\partial s}{\partial \text{vec} \left( W_2^{(l)} \right)} \right)^T \right)
    = \text{vec}^{-1} \left( \left( \frac{\partial s}{\partial h_{i,2}^{(l)}} \frac{\partial h_{i,2}^{(l)}}{\partial \text{vec} \left( W_2^{(l)} \right)} \right)^T \right)
    \\ & \ \ \ \ \ \  = \text{vec}^{-1} \left( \left( \frac{\partial h_{i,2}^{(l)}}{\partial \text{vec} \left( W_2^{(l)} \right)} \right)^T \left( \frac{\partial s}{\partial h_{i,2}^{(l)}} \right)^T \right)
    = \text{vec}^{-1} \left( \left( \left( v_i^{(l-1)} \right)^T \otimes I_{m_l \times m_l} \right)^T {\hat{h}_{i,2}^{(l)}} \right)
    \\ & \ \ \ \ \ \  = \text{vec}^{-1} \left( \left( v_i^{(l-1)} \otimes I_{m_l \times m_l} \right) {\hat{h}_{i,2}^{(l)}} \right)
    = \hat{h}_{i,2}^{(l)} \left( v_i^{(l-1)} \right)^T
    \\ &
    \hat{b}_2^{(l)}
    = \left( \frac{\partial s}{\partial b_2^{(l)}} \right)^T
    = \left( \frac{\partial s}{\partial h_{i,2}^{(l)}} \frac{\partial h_{i,2}^{(l)}}{\partial b_2^{(l)}} \right)^T
    = {\hat{h}_{i,2}^{(l)}},
\end{align*}
where $\text{vec}^{-1}()$ is the inverse map of the "vectorization" map $\text{vec}()$.

Then, we have Algorithm \ref{algo_10}.

\begin{algorithm}[h]
	\caption{Compute the product of $J_i^T$ and a vector $x$
	}
	\label{algo_10}
	\begin{algorithmic}[1]
        \STATE {\bf Input:} $x$, $\theta$, $h$, $v$
        \STATE {\bf Output:} $J_i^T x$ or $\hat{h}_{2}^{(l)}\ (l = 1, ..., L)$

        \STATE unpack $\theta$ to be $W^{(l)}$, $b^{(l)}$ ($l = 1, ..., L$)

        \STATE $\hat{v}_{2}^{(L)} = x$

        \FOR {$l = L, ..., 1$}
            \STATE $\hat{h}_{2}^{(l)} = \left( \frac{\partial v^{(l)}}{\partial h^{(l)}} \right)^T \hat{v}_{2}^{(l)}$
        	\STATE $\hat{v}_{2}^{(l-1)} = (W^{(l)})^T \hat{h}_{2}^{(l)}$
        	\STATE $\hat{W}_2^{(l)} = \hat{h}_{2}^{(l)} (v^{(l-1)})^T$
        	\STATE $\hat{b}_2^{(l)} = \hat{h}_{2}^{(l)}$
        \ENDFOR
        \STATE pack $\hat{W}_2^{(l)}$, $\hat{b}_2^{(l)}$ ($l = 1, ..., L$) to be $J_i^T x$
    \end{algorithmic}
\end{algorithm}

Note that we have an option of outputting $J_i^T x$ or $\hat{h}_{2}^{(l)}\ (l = 1, ..., L)$. In the latter case (partial-computing mode), some operations can be skipped to save time.

\subsubsection{\texorpdfstring{Computing $J_i^T V$}{Lg}}
\label{section_8}

We present the algorithm for computing $J_i^T V$, where $V \in R^{m_L}$ is an arbitrary vector whose dimension matches the column dimension of $J_i^T$.
The vector $J_i^T V$ is of length $n$, which corresponds to the parameters $\theta$ of the neural network. We use $\hat{W}_2^{(l)}$ and $\hat{b}_2^{(l)}$ to denote the part in $J_i^T V$ corresponding to the part $W^{(l)}$ and $b^{(l)}$ in $\theta$, for $l = 1, ..., L$.

\begin{algorithm}[h]
	\caption{Compute $J_i^T V$ by backpropagation}
	\label{algo_12}
	\begin{algorithmic}[1]


        \STATE $\hat{v}_{2}^{(L)} = V$

        \FOR {$l = L, ..., 1$}
            \STATE $\hat{h}_{2}^{(l)} = \left( \frac{\partial v_i^{(l)}}{\partial h_i^{(l)}} \right)^T \hat{v}_{2}^{(l)}$
        	\STATE $\hat{v}_{2}^{(l-1)} = (W^{(l)})^T \hat{h}_{2}^{(l)}$
        	\STATE $\hat{W}_2^{(l)} = \hat{h}_{2}^{(l)} (v_i^{(l-1)})^T$
        	\STATE $\hat{b}_2^{(l)} = \hat{h}_{2}^{(l)}$
        \ENDFOR
    \end{algorithmic}
\end{algorithm}

We can compute $J_i^T V$ by a backpropagation, described in Algorithm \ref{algo_12} in $O(n)$ time. From Algorithm \ref{algo_12}, it is clear that the part of $J_i^T V$ that corresponds to a $W^{(l)}$ is the outer product of two vectors, which can be expressed as the Kronecker product of a column vector with a row vector. This observation was also made in \cite{martens2015optimizing} and \cite{botev2017practical} and can be useful when we compute $J_{i_1} J_{i_2}^T$, as shown in Section \ref{section_9}.

\subsection{\texorpdfstring{$(J_{i_1}^T x_1)^T J_{i_2}^T x_2$}{Lg}} \label{section_1}

The straightforward way to form $(J_{i_1}^T x_1)^T J_{i_2}^T x_2$ is to compute both $J_{i_1}^T x_1$ and $J_{i_2}^T x_2$ using Algorithm \ref{algo_10}, and then compute their dot product. We now present a much more efficient way to do this. In the following, we use superscripts to distinguish variables associated with $J_{i_1}^T x_1$ and $J_{i_2}^T x_2$. Since $\text{vec}(a_1 b_1^T)^T \text{vec}(a_2 b_2^T) = (b_1 \otimes a_1)^T (b_2 \otimes a_2) = (b_1^T \otimes a_1^T) (b_2 \otimes a_2) = (b_1^T b_2) \otimes (a_1^T a_2) = (b_1^T b_2) (a_1^T a_2)$, we have that
\begin{align*}
	& (J_{i_1}^T x_1)^T J_{i_2}^T x_2
    = (\hat{\theta}^{(1)}_1)^T \hat{\theta}^{(2)}_2
    = \sum_{l = 1}^L \left[ \left( \text{vec} \left( \hat{W}_1^{(l), (1)} \right) \right)^T \text{vec} \left( \hat{W}_2^{(l), (2)} \right) + (\hat{b}_1^{(l), (1)})^T \hat{b}_2^{(l), (2)} \right]
    \\ = & \sum_{l = 1}^L \left[ \left( \text{vec} \left( \hat{h}_{i_1, 2}^{(l)} (v_{i_1}^{(l-1)})^T \right) \right)^T \text{vec} \left( \hat{h}_{i_2, 2}^{(l)} (v_{i_2}^{(l-1)})^T \right) + (\hat{h}_{i_1, 2}^{(l)})^T \hat{h}_{i_2, 2}^{(l)} \right]
    \\ = & \sum_{l = 1}^L \left( (v_{i_1}^{(l-1)})^T v_{i_2}^{(l-1)} + 1 \right) \cdot \left( (\hat{h}_{i_1, 2}^{(l)})^T \hat{h}_{i_2, 2}^{(l)} \right).
\end{align*}

Hence, we can compute $(J_{i_1}^T x_1)^T J_{i_2}^T x_2$ without actually forming these two vectors. On the contrary, we can simply use the vectors $\hat{h}_{i, 2}^{(l)}$ and $v_{i}^{(l)}$ (defined in Section \ref{section_2}).

\subsection{\texorpdfstring{$
\left(
J_{i_1} J_{i_2}^T
\right)_{i_1, i_2 = 1, ..., N}
$
}{Lg}}
\label{section_4}

First, consider computing a single matrix $J_{i_1} J_{i_2}^T$ for $i_1, i_2 = 1, ..., N$. If we denote $V_{i_1, i_2}^{(l-1)} = (v_{i_1}^{(l-1)})^T v_{i_2}^{(l-1)} + 1 $, the $(j_1, j_2)$-th element of it is computed as
\begin{align*}
    & e_{j_1}^T J_{i_1} J_{i_2}^T e_{j_2}
    = (J_{i_1}^T e_{j_1})^T J_{i_2}^T e_{j_2}
    = \sum_{l = 1}^L \left( (v_{i_1}^{(l-1)})^T v_{i_2}^{(l-1)} + 1 \right) \left( (\hat{h}_{i_1, 2}^{(l), (j_1)})^T \hat{h}_{i_2, 2}^{(l), (j_2)} \right)
    \\ = & \sum_{l = 1}^L V_{i_1, i_2}^{(l-1)} \left( (\hat{h}_{i_1, 2}^{(l), (j_1)})^T \hat{h}_{i_2, 2}^{(l), (j_2)} \right).
\end{align*}
Furthermore, if we denote $\hat{H}_{i, 2}^{(l)} =
    \begin{pmatrix}
    	\hat{h}_{i, 2}^{(l), (1)} & \cdots & \hat{h}_{i, 2}^{(l), (m_L)}
    \end{pmatrix}
$, we have that $J_{i_1}^T J_{i_2} = \sum_{l = 1}^L V_{i_1, i_2}^{(l-1)} (\hat{H}_{i_1, 2}^{(l)})^T \hat{H}_{i_2, 2}^{(l)}$.

Furthermore, when computing $B$, we can use the following shortcut:
\begin{align*}
	&
	\begin{pmatrix}
    	J_1 J_1^T & \cdots & J_1 J_N^T
        \\ \cdots & \cdots & \cdots
        \\ J_N J_1^T & \cdots & J_N J_N^T
    \end{pmatrix}
    = \left(
    \begin{array}{ccc}
    	J_{i_1} J_{i_2}^T
    \end{array}
    \right)_{i_1, i_2 = 1, ..., N}
    = \left(
    \begin{array}{ccc}
    	\sum_{l = 1}^L V_{i_1, i_2}^{(l-1)} (\hat{H}_{i_1, 2}^{(l)})^T \hat{H}_{i_2, 2}^{(l)}
    \end{array}
    \right)_{i_1, i_2 = 1, ..., N}
    \\ = & \sum_{l = 1}^L \left(
    \begin{array}{ccc}
    	V_{i_1, i_2}^{(l-1)} (\hat{H}_{i_1, 2}^{(l)})^T \hat{H}_{i_2, 2}^{(l)}
    \end{array}
    \right)_{i_1, i_2 = 1, ..., N}
    \\ = & \sum_{l = 1}^L
    \left(
    \begin{array}{ccc}
    	V_{i_1, i_2}^{(l-1)} 1_{m_L \times m_L}
    \end{array}
    \right)_{i_1, i_2 = 1, ..., N}
    \odot
    \left(
    \begin{array}{ccc}
    	(\hat{H}_{i_1, 2}^{(l)})^T \hat{H}_{i_2, 2}^{(l)}
    \end{array}
    \right)_{i_1, i_2 = 1, ..., N}
    \\ & \text{(where $\odot$ denotes pointwise multiplication, }
    \text{and $1_{m \times m}$ denotes an $m \times m$ matrix of all ones)}
    \\ = & \sum_{l = 1}^L
    \left( V^{(l-1)} \otimes 1_{m_L \times m_L} \right) \odot \left( (\hat{H}_2^{(l)})^T \hat{H}_2^{(l)} \right)
    \\ & \text{(let $\hat{H}_2^{(l)} =
    \begin{pmatrix}
    	\hat{H}_{1, 2}^{(l)} & \cdots & \hat{H}_{N, 2}^{(l)}
    \end{pmatrix}
    $, }
    \text{let $V^{(l)} =
    \left(
    \begin{array}{ccc}
    	V_{i_1, i_2}^{(l)}
    \end{array}
    \right)_{i_1, i_2 = 1, ..., N}
    $)}
\end{align*}

The cost of computing the above expression is $O(L m_L^2 N^2 + \sum_{l=1}^N \left(m_l m_L^2 N^2 + m_L^2 N^2 + m_{l-1} N^2 \right) ) = O(m_L^2 N^2 \sum_{l=1}^N m_l)$.

Then, we have Algorithm \ref{algo_9}.

\begin{algorithm}[h]
	\caption{Compute a $|S| \times |S|$ block matrix $\left(
J_{i_1} J_{i_2}^T
\right)_{i_1, i_2 \in S}$
}
	\label{algo_9}
	\begin{algorithmic}[1]
    	\STATE {\bf Input:} $\theta$, $h_i$, $v_i$ ($i \in S$), $S$
        \STATE {\bf Output:} $
\left(
J_{i_1} J_{i_2}^T
\right)_{i_1, i_2 \in S}
$

    	\FOR {$i \in S$}
        	\FOR {$j = 1, ..., m_L$}
            	\STATE $
                \left(
                \begin{array}{ccc}
                	\hat{h}_{i, 2}^{(l), (j)}
                \end{array}
                \right)_{l = 1, ..., L}
                = \text{Compute\_J\_transpose\_V}(e_j, \theta, h_i, v_i)$ (partly-computing mode)
                \STATE \COMMENT{see Algorithm \ref{algo_10}}
            \ENDFOR
        \ENDFOR

        \FOR {$l = 1, ..., L$}
        	\FOR {$i \in S$}
            	\STATE $\tilde{v}_i^{(l-1)} =
            	\begin{pmatrix}
            		v_i^{(l-1)}
                	\\ 1
            	\end{pmatrix}
            	$

        		\STATE $\hat{H}_{i,2}^{(l)} =
    			\begin{pmatrix}
    				\hat{h}_2^{(l), (1)} & \cdots & \hat{h}_2^{(l), (m_L)}
    			\end{pmatrix}
    			$
            \ENDFOR

            \STATE $\tilde{v}^{(l-1)} =
            \begin{pmatrix}
            	\tilde{v}_i^{(l-1)}
            \end{pmatrix}_{i \in S}
            $
            \COMMENT{arranged in a row}

        	\STATE $\hat{H}_2^{(l)} =
    		\begin{pmatrix}
    			\hat{H}_{i, 2}^{(l)}
    		\end{pmatrix}_{i \in S}
    		$
            \COMMENT{arranged in a row}

        	\STATE $B_l = \left( \left( (\tilde{v}^{(l-1)})^T \tilde{v}^{(l-1)} \right) \otimes 1_{m_L \times m_L} \right) \odot \left( (\hat{H}_2^{(l)})^T \hat{H}_2^{(l)} \right)$
        \ENDFOR

        \RETURN $\sum_{l=1}^L B_l$
	\end{algorithmic}
\end{algorithm}

\end{document}